\newcommand{\lst}[1]{\left( #1\right)}
\newcommand{\set}[1]{\left\{#1\right\}}
\newcommand{\tuple}[1]{\left\langle #1 \right\rangle}
\newcommand{\dd }[1]{\mathcal{#1}}
\newcommand{\arc}[1]{\lst{#1}}
\newcommand{\rst}[1]{\underline{\dd{#1}}}
\newcommand{\rlx}[1]{\overline {\dd{#1}}}
\newcommand{\pb}[1]{\mathcal{#1}}
\newcommand{\R}{\mathbb{R}}
\newcommand{\lb}[1]{\underline{#1}}
\newcommand{\ub}[1]{\overline{#1}}
\newcommand{\alg}[1]{Alg.~\ref{#1}}
\tikzstyle{Exact}=[shape=circle, fill={rgb,255: red,222; green,255; blue,253}, draw=black, minimum size=1.5em, tikzit fill={rgb,255: red,222; green,255; blue,253}, tikzit draw=black]
\tikzstyle{Cutset}=[shape=circle, fill={purple!30}, draw=black, minimum size=1.5em, tikzit fill={rgb,255: red,255; green,143; blue,145}, tikzit draw=black]
\tikzstyle{Inexact}=[shape=circle, fill={red!10}, draw=black, double, minimum size=1.5em, tikzit fill={rgb,255: red,222; green,255; blue,253}, tikzit draw=black]
\tikzstyle{Suppressed}=[shape=circle, fill={red!20}, draw=black, dashed, minimum size=1.5em, tikzit fill={rgb,255: red,255; green,143; blue,145}, tikzit draw=black]
\tikzstyle{FringeElem}=[minimum height=52pt, fill={rgb,255: red,222; green,255;
\tikzstyle{Decision}=[color=purple]
\tikzstyle{Weight}=[font={\itshape \bfseries \scriptsize}, color={rgb,255: red,0; green,128; blue,128}]
\tikzstyle{RedArrow}=[draw=red, ->]
\tikzstyle{BlackArrow}=[draw=black, ->]
\tikzstyle{BestPath}=[-, line width=1.1pt, tikzit draw=red]
\tikzstyle{Pruned}=[-, dashed]
\tikzstyle{RelaxedArc}=[-, double, tikzit draw={rgb,255: red,255; green,191; blue,191}]
\tikzstyle{NonSolution}=[-, tikzit draw=magenta, snake=snake, segment amplitude=.5mm]
\begin{document}
\title{Improving the filtering of Branch-And-Bound MDD solver (extended)}

\author{
    Xavier Gillard     \inst{1}\orcidID{0000-0002-4493-6041}  \and
    Vianney Coppé      \inst{1}\orcidID{0000-0001-5050-0001}  \and
    Pierre Schaus      \inst{1}\orcidID{0000-0002-3153-8941}  \and
    André Augusto Cire \inst{2}\orcidID{0000-0001-5993-4295}
}
\institute{
    Université Catholique de Louvain, BELGIUM\\
    University of Toronto Scarborough and Rotman School of Management, CANADA
    \email{\{xavier.gillard, pierre.schaus, vianney.coppe\}@uclouvain.be},
    \email{andre.cire@rotman.utoronto.ca}
}
\authorrunning{
    X. Gillard et al.
}

\maketitle
\begin{abstract}
    This paper presents and evaluates two pruning techniques to reinforce the
    efficiency of constraint optimization solvers based on multi-valued 
    decision-diagrams (MDD). It adopts the branch-and-bound framework proposed 
    by Bergman et al. in 2016 to solve dynamic programs to optimality. In 
    particular, our paper presents and evaluates the effectiveness of the 
    local-bound (LocB) and rough upper-bound pruning (RUB). 
    LocB is a new and effective rule that leverages the approximate 
    MDD structure to avoid the exploration of non-interesting nodes. 
    RUB is a rule to reduce the search space during the development 
    of bounded-width-MDDs.
    The experimental study we conducted on the Maximum Independent Set Problem 
    (MISP), Maximum Cut Problem (MCP), Maximum 2 Satisfiability (MAX2SAT) and 
    the Traveling Salesman Problem with Time Windows (TSPTW)
    shows evidence indicating that rough-upper-bound and local-bound pruning 
    have a high impact on optimization solvers based on branch-and-bound with 
    MDDs. In particular, it shows that RUB delivers excellent results but 
    requires some effort when defining the model. Also, it shows that LocB 
    provides a significant improvement automatically; without necessitating any 
    user-supplied information. Finally, it also shows that rough-upper-bound and 
    local-bound pruning are not mutually exclusive, and their combined benefit 
    supersedes the individual benefit of using each technique.
\end{abstract}

\section*{Introduction}
\emph{Multi-valued Decision Diagrams} (MDD) are a generalization of 
\emph{Binary Decision Diagrams} (BDD) which have long been used in the 
verification, e.g., for model checking purposes 
\cite{mcmillan:92:symbolic-model-checking}. Recently, these graphical 
models have drawn the attention of researchers from the CP and OR communities. 
The popularity of these decision diagrams (DD) stems from their ability to 
provide a compact representation of large solution spaces as in the case of the 
table constraint \cite{regin:15:mdd,helene:18:compact-mdd}. 
One of the 
research streams which emerged from this increased interest about MDDs is
\emph{decision-diagram-based optimization} (DDO) \cite{bergman:16:theoretical}. 
Its purpose is to efficiently solve combinatorial optimization problems by 
exploiting problem structure through DDs. 
This paper belongs to the DDO sub-field and intends 
to further improve the efficiency of DDO solvers through the introduction of two 
bounding techniques: local-bounds pruning (LocB) and rough-upper-bound pruning 
(RUB).

This paper starts by covering the necessary background on DDO. Then, it presents 
the  local-bound and rough-upper-bound pruning techniques in Sections 
\ref{sec:local-bounds} and \ref{sec:rub}. After that, it presents an 
experimental study which we conducted using `ddo' 
\cite{gillard:20:ddo}\footnote{\url{https://github.com/xgillard/ddo}}, our open 
source fast and generic MDD-based optimization library. This experimental study 
investigates the relevance of RUB and LocB through four disinct NP-hard 
problems: the Weighted 
Maximum Independent Set Problem (MISP), Maximum Cut Problem (MCP), Maximum 2 
Satisfiability Problem (MAX2SAT) and the Traveling Salesman Problem with Time 
Windows (TSPTW). Finally, section \ref{sec:related} discusses previous related 
work before drawing conclusions.

\section{Background}\label{sec:background}
The coming paragraphs give an overview of discrete optimization with decision 
diagrams. Most of the formalism presented here originates from 
\cite{bergman:16:branch-and-bound}. Still, we reproduce it here for the sake of 
self-containedness.
\paragraph{Discrete optimization.}\label{sec:discrete-optimization}
A discrete optimization problem is a constraint \emph{satisfaction} problem with 
an associated objective function to be maximized. The discrete optimization 
problem $\pb{P}$ is defined as $\max \set{f(x) \mid x \in D \wedge C(x)}$ where 
$C$ is a set of constraints, $x = \tuple{x_0, \dots, x_{n-1}}$ is an assignment 
of 
values to variables, each of which has an associated finite domain $D_i$ s.t. $D 
= D_0 \times \dots \times D_{n-1}$ from where the values are drawn. In that 
setup, 
the function $f: D \to \R$ is the objective to be maximized. 

Among the set of feasible solutions $Sol(\pb{P}) \subseteq D$ (i.e. satisfying 
all constraints in $C$), we denote the optimal solution by $x^*$. That is, $x^* 
\in Sol(\pb{P})$ and $\forall x \in Sol(\pb{P}) : f(x^*) \ge f(x)$.

\paragraph{Dynamic programming.}
Dynamic programming (DP) was introduced in the mid 50's by Bellman 
\cite{bellman:54}. This strategy is significantly popular and is at the heart of 
many classical algorithms (e.g., Dijkstra's 
algorithm \cite[p.658]{cormen:09:algorithms} or Bellman-Ford's 
\cite[p.651]{cormen:09:algorithms}).

Even though a dynamic program is often thought of in terms of recursion, it is 
also natural to consider it as a labeled transition system. In that case, the 
\emph{DP model} of a given discrete optimization problem $\pb{P}$ consists 
of:
\begin{itemize}
    \item a set of state-spaces $S_0, \dots, S_{n}$ among which one      
    distinguishes the \emph{initial state} $r$, the \emph{terminal 
        state} $t$ and the \emph{infeasible state} $\bot$.
    \item a set of transition functions $t_i : S_i \times D_i \to S_{i+1}$ for  
    $i = 0, \dots, n-1$ taking the system from one state $s^{i}$ to the next 
    state $s^{i+1}$ based on the value $d$ assigned to variable $x_i$ (or 
    to $\bot$ if assigning $x_i = d$ is infeasible). These functions 
    should never allow one to recover from infeasibility ($t_i(\bot, d) 
    = \bot$ for any $d \in D_i$).
    \item a set of transition cost functions $h_i: S_i \times D_i \to \R$   
    representing the immediate reward of assigning some value $d \in D_i$ to the 
    variable $x_i$ for $i = 0, \dots, n-1$.
    \item an initial value $v_r$.      
\end{itemize}

On that basis, the objective function $f(x)$ of $\pb{P}$ can be formulated as 
follows:

\begin{align*}
& \text{maximize~} f(x) = v_r + \sum_{i = 0}^{n-1} h_i(s^i, x_i) \\
& \text{subject to}\\
& s^{i+1} = t_i(s^i, x_i) \text{~for~} i = 0, \dots, n-1 ; x_i \in D_i \wedge 
C(x_i) \\
& s^i \in S_i \text{~for~} i = 0, \dots, n
\end{align*}
where $C(x_i)$ is a predicate that evaluates to $true$ when the partial 
assignment $\tuple{x_0, \dots ,x_i}$ does not violate any constraint in $C$.

The appeal of such a formulation stems from its simplicity and its 
expressiveness which allows it to effectively capture the problem structure. 
Moreover, this formulation naturally lends itself to a DD representation; in 
which case it represents an exact DD encoding the complete set $Sol(\pb{P})$. 

\subsection{Decision diagrams} \label{sec:dd}
Because DDO aims at solving constraint \emph{optimization} problems and not just 
constraint \emph{satisfaction} problems, it uses a particular DD flavor known as 
reduced weighted DD -- DD as of now. As initially posed by 
Hooker\cite{hooker:13:dd-dp}, DDs can be perceived as a compact representation 
of the search trees. This is achieved, in this context, by superimposing 
isomorphic subtrees.

To define our DD more formally, we will slightly adapt the notation from  
\cite{bergman:16:theoretical}. 
A DD $\dd{B}$ is a layered directed acyclic graph $\dd{B} = \tuple{n, U, A, l,  
    d, v, \sigma}$ where $n$ is the number of variables from the encoded 
    problem, 
$U$ is a set of nodes; each of which is associated to some state $\sigma(u)$.
The mapping $l: U \to \set{0 \dots n}$ partitions the nodes from $U$ in 
disjoint layers $L_0 \dots L_{n}$ s.t. $L_i = \set{u \in U: l(u) = i}$ and the 
states of all the nodes belonging to the same layer pertain to the same 
DP-state-space ($\forall u \in L_i : \sigma(u) \in S_i$ for $i = 0, \dots, 
n$). Also, it should be the case that no two distinct nodes of one same layer 
have the same state ($\forall u_1, u_2 \in 
L_i: u_1 \ne u_2 \implies \sigma(u_1) \ne \sigma(u_2)$, for $i = 0, \dots, 
n$).

The set $A \subseteq U \times U$ from our formal model is a set of directed 
arcs connecting the nodes from $U$. Each such arc $a = \arc{u_1, u_2}$ connects 
nodes from subsequent layers ($l(u_1) = l(u_2) - 1$) and should be regarded as 
the materialization of a branching decision about variable $x_{l(u_1)}$. This is 
why all arcs are annotated via the mappings $d: A \to D$ and $v: A \to \R$ 
which  respectively associate a decision and value (weight) with the given arc. 

\begin{example}
    An arc $a$ connecting nodes $u_1 \in L_3$ to $u_2 \in L_4$, annotated with 
    $d(a) = 6$ and $v(a) = 42$ should be understood as the assignment $x_3 = 6$ 
    performed from state $\sigma(u_1)$. It should also be understood that 
    $t_3(\sigma(u_1), 6) = \sigma(u_2)$ and the benefit of that assignment is 
    $v(a) = h_3(\sigma(u_1), 6) = 42$.
\end{example}

Because each $r$-$t$ path describes an  
assignment that satisfies $\pb{P}$, we will use $Sol(\dd{B})$ to denote the set 
of all the solutions encoded in the r-t paths of DD $\dd{B}$. Also, because 
unsatisfiability is irrecoverable, r-$\bot$ paths are typically omitted from  
DDs. It follows that a nice property from using a DD representation 
$\dd{B}$ for the DP formulation of a problem $\pb{P}$, is that finding $x^*$ is 
as simple as finding the longest r-t path in $\dd{B}$ (according to the relation 
$v$ on arcs).

\paragraph{Exact-MDD.} \label{sec:exact}
For a given problem $\pb{P}$, an exact MDD $\dd{B}$ is an MDD that exactly  
encodes the solution set $Sol(\dd{B}) = Sol(\pb{P})$ of the problem $\pb{P}$. In 
other words, not only do all r-t paths encode valid solutions of $\pb{P}$, but 
no feasible solution is present in $Sol(\pb{P})$ and not in $\dd{B}$.
An exact MDD for $\pb{P}$ can be compiled in a top-down fashion\footnote{
    An incremental refinement \emph{a.k.a. construction by separation}  
    procedure is detailed in \cite[pp. 51--52]{cire:14:decision-diagrams} but we 
    will not cover it here for the sake of conciseness.}. 
This naturally follows from the above definition. To that end, one simply 
proceeds by a repeated unrolling of the transition relations until all 
variables are assigned.

\subsection{Bounded-Size Approximations} \label{sec:approx}
In spite of the compactness of their encoding, the construction of DD suffers 
from a potentially exponential memory requirement in the worst 
case\footnote{
    Consequently, it also suffers from a potentially exponential time 
    requirement in the worst case. Indeed, time is constant in the final number 
    of nodes (unless the transition functions themselves are exponential in the 
    input).}. 
Thus, using DDs to exactly encode the solution space of a problem is often 
intractable. Therefore, one must resort to the use of \emph{bounded-size} 
approximation of the exact MDD. These are compiled generically by inserting a 
call to a width-bounding procedure to ensure that the width (the number  
$|L_i|$ of distinct nodes belonging to the $L_i$) of the current layer $L_i$ 
does not exceed a given bound $W$. Depending on the behavior of that procedure, 
one can either compile a  restricted-MDD (= an under-approximation) or a 
relaxed-MDD (= an over-approximation). 

\paragraph{Restricted-MDD: Under-approximation.} \label{sec:restricted}
A restricted-MDD provides an under-approximation of some exact-MDD. As such, all 
paths of a restricted-MDD encode valid solutions, but some solutions might be 
missing from the MDD. This is formally expressed as follows: given the DP 
formulation of a problem $\pb{P}$, $\dd{B}$ is a restricted-MDD iff $Sol(\dd{B}) 
\subseteq Sol(\pb{P})$.

To compile a restricted-MDD, it is sufficient to simply delete certain nodes 
from the current layer until its width fits within the specified bound $W$. To 
that end, the width-bounding procedure simply selects a subset of the nodes from 
$L_i$ which are heuristically assumed to have the less impact on the tightness 
of the bound. Various heuristics have been studied in the literature 
\cite{bergman:14:optimization-bounds}, and \emph{minLP} was shown to be the 
heuristic that works best in practice. This heuristic decides to select (hence 
remove) the nodes having the shortest longest path from the root.

\paragraph{Relaxed-MDD: Over-approximation.} \label{sec:relaxed}
A relaxed-MDD $\dd{B}$ provides a bounded-width over-approximation of some  
exact-MDD. As such, it may hold paths that are no solution to $\pb{P}$, the 
problem being solved. We have thus formally that $Sol(\dd{B}) \supseteq 
Sol(\pb{P})$.

Compiling a relaxed-MDD requires one to be able to \emph{merge} several nodes  
into an inexact one. To that end, we use two operators: 
\begin{itemize}
    \item $\oplus$ which yields a new node combining the states of a selection
    of nodes so as to over-approximate the states reachable in the selection.
    \item $\Gamma$ which is used to possibly relax the weight of arcs incident to
    the selected nodes.
\end{itemize}

These operators are used as follows. Similar to the restricted-MDDs case, the 
width-bounding procedure starts by heuristically selecting the least promising 
nodes and removing them from layer $L_i$. Then the states of these selected 
nodes are combined with one another so as to create a merged node $\mathcal{M} = 
\oplus(selection)$. After that, the inbound arcs incident to all selected nodes 
are $\Gamma$-relaxed and redirected towards $\mathcal{M}$. Finally, the 
result of the merger ($\mathcal{M}$) is added to the layer in place of the 
initial selection of nodes.

\paragraph{Summary.} 
Fig.~\ref{fig:mdd:exact-restricted-relaxed} summarizes the information from 
sections \ref{sec:dd} and \ref{sec:relaxed}. It displays the three MDDs 
corresponding to one same example problem having four variables. The exact MDD 
(a) encodes the complete solution set and, equivalently, the state space of the 
underlying DP encoding. One easily notices that the 
restricted DD (b) is an under approximation of (a) since it achieves its width 
boundedness by removing nodes d and e and their children (i, j). Among others, 
it follows that the solution $[ x_0= 0, x_1 = 0, x_2 = 0, x_3 = 0 ]$ is not 
represented in (b) even though it exists in (a). Conversely, the relaxed diagram 
(c) achieves a maximum layer with of 3 by merging nodes d, e and h into a new 
inexact node $\mathcal{M}$ and by relaxing all arcs entering one of the merged 
nodes. Because of this, (c) introduces solutions that do not exist in (a) as is 
for instance the case of the assignment $[ x_0= 0, x_1 = 0, x_2 = 3, x_3 = 1 ]$. 
Moreover, because the operators $\oplus$ and $\Gamma$ are correct\footnote{The 
    very definition of these operators is problem-specific. However, 
    \cite{hooker:17:job-sequencing} formally defines the conditions that are 
    necessary to correctness.}, the length of the longest path in (c) is an 
    upper 
bound on the optimal value of the objective function. Indeed, one can see that 
the length of the longest path in (a) (= the exact optimal solution) has a value 
of 25 while it amounts to 26 in (c).

\begin{figure*}[h]
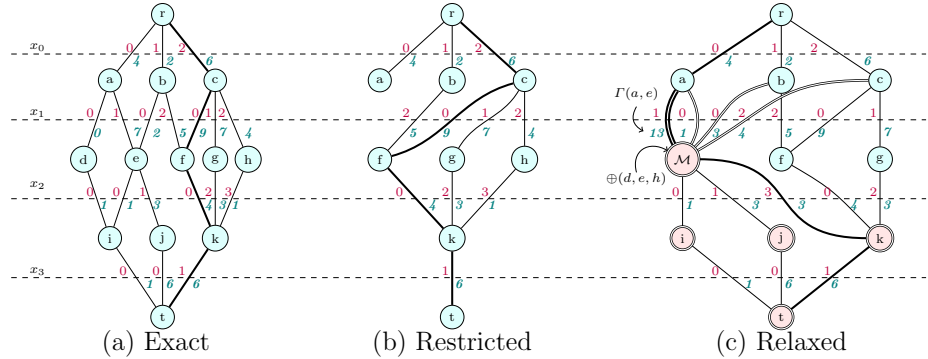

    \centering
    \scalebox{.7}{
        \scriptsize
        \tikzfig{graphs/exact_restricted_relaxed}
    }
    \caption{
        The exact (a), restricted (b) and relaxed (c) versions of an 
        MDD with four variables. The width of MDDs (b) and (c) have been 
        bounded to a maximum layer width of three. The decision labels of the 
        arcs are shown above the layers separation lines (dashed). The arc 
        weights are shown below the layer separation lines. The longest path of 
        each MDD is boldfaced. In (c), the node $\mathcal{M}$ is the result of 
        merging nodes d, e and h with the $\oplus$ operator. Arcs that 
        have been relaxed with the $\Gamma$ operator are pictured with a double 
        stroke. Note, because these arcs have been $\Gamma$-relaxed, their value 
        might be greater than that of corresponding arcs in (a), (b). 
        Similarly, all ``inexact'' nodes feature a double border.
    }
    \label{fig:mdd:exact-restricted-relaxed}
\end{figure*}

\subsection{The Dynamics of Branch-and-Bound with DDs}\label{sec:bab-dd}
Being able to derive good lower and upper  bounds for some optimization problem  
$\pb{P}$ is useful when the goal is to use these bounds to strengthen algorithms 
\cite{davarnia:18:approx-non-linear,tjandraatmadja:18:decision-diagrams,tjandraatmadja:19:cuts-from-relaxed}.
But it is not the only way these approximations can be used. A complete 
and efficient branch-and-bound algorithm relying on those approximations was 
proposed in \cite{bergman:16:branch-and-bound} which we hereby reproduce 
(\alg{alg:branch-and-bound}).

This algorithm works as follows: at start, the node $r$ is created for the 
initial state of the problem and placed onto the \emph{fringe} -- a global 
priority queue that tracks all nodes remaining to explore and orders them from 
the most to least promising. Then, a loop consumes the nodes from that fringe 
(line \ref{alg:branch-and-bound:1a}), one at a time and explores it until the 
complete state space has been exhausted. The \emph{exploration} of a node $u$ 
inside that loop proceeds as follows: first, one compiles a restricted DD 
$\rst{B}$ for the sub-problem rooted in $u$ (line \ref{alg:branch-and-bound:1b}).
Because all paths in a restricted DD are feasible solutions, when the lower 
bound $v^*(\rst{B})$ derived from the restricted DD $\rst{B}$ improves over the 
current best known solution $\lb{v}$; then the longest path of $\rst{B}$ (best 
sol. found in $\rst{B}$) and its length $v^*(\rst{B})$ are memorized (lines 
\ref{alg:branch-and-bound:improve-lb-start}-\ref{alg:branch-and-bound:improve-lb-end}).

In the event where $\rst{B}$ is exact (no restriction occurred during the 
compilation of $\rst{B}$), it covers the complete state space of the sub-problem 
rooted in $u$. Which means the processing of $u$ is complete and we may safely 
move to the next node. When this condition is not met, however, some additional 
effort is required.
In that case, a \emph{relaxed} DD $\rlx{B}$ is compiled from $u$(line 
\ref{alg:branch-and-bound:relax}). That relaxed DD serves two purposes: first, 
it is used to derive an upper bound $v^*(\rlx{B})$ which is compared to the 
current best known solution (line \ref{alg:branch-and-bound:2b}). This gives us 
a chance to prune the unexplored state space under $u$ when $v^*(\rlx{B})$ 
guarantees it does not contain any better solution than the current best. 
The second use of $\rlx{B}$ happens when $v^*(\rlx{B})$ cannot provide such a 
guarantee. In that case, the exact cutset of $\rlx{B}$ is used to enumerate 
residual sub-problems which are enqueued onto the fringe (lines 
\ref{alg:bab-cutset-start}-\ref{alg:bab-cutset-end}).

A cutset for some relaxed DD $\rlx{B}$ is a subset $\mathcal{C}$ of 
the nodes from $\rlx{B}$ such that any $r-t$ path of $\rlx{B}$ goes through at 
least one node $\in \mathcal{C}$. Also, a node $u$  is said to be exact iff all 
its incoming paths lead to the same state $\sigma(u)$. From there, an exact 
cutset of $\rlx{B}$ is simply a cutset whose nodes are all exact.
Based on this definition, it is easy to convince  
oneself that an exact cutset constitutes a frontier up to which the relaxed DD 
$\rlx{B}$ and its exact counterpart $\dd{B}$ have not diverged. And, because it 
is a 
cutset, the nodes composing that frontier cover all paths from both $\dd{B}$ and 
$\rlx{B}$; which guarantees the completeness of 
\alg{alg:branch-and-bound} \cite{bergman:16:branch-and-bound}.

Any relaxed-MDD admits at least one exact cutset -- e.g. the trivial $\set{r}$ 
case. Often though, it is not unique and different options exist as to what 
cutset to use. It was experimentally shown by \cite{bergman:16:branch-and-bound} 
that most of  the time, the Last Exact Layer (LEL) is superior to all other 
exact cutsets in practice. LEL consists of the \emph{deepest} layer of the 
relaxed-MDD having all its nodes exact.
\begin{example} 
    In Fig.-\ref{fig:mdd:exact-restricted-relaxed} (c), the first 
    inexact node $\mathcal{M}$ occurs in layer $L_2$. Hence, the LEL cutset 
    comprises all nodes (a, b, c) from the layer $L_1$. Because $\mathcal{M}$ 
    is inexact, and because it is a parent of nodes i, j and k, these three 
    nodes are considered inexact too.
\end{example}
\vspace{-2em}
\noindent
\begin{center}
\scalebox{0.8}{
\begin{minipage}[t]{.6\textwidth}
\begin{algorithm}[H]
    \begin{algorithmic}[1]
        \STATE Create node $r$ and add it to $Fringe$ 
        \label{alg:branch-and-bound:1a}
        \STATE $\lb{x} \gets \bot$
        \STATE $\lb{v} \gets -\infty$
        \WHILE{$Fringe$ is not empty}
        \STATE $u \gets Fringe.pop()$ \label{alg:branch-and-bound:1b}
        \STATE $\rst{B} \gets Restricted(u)$ \label{alg:branch-and-bound:2a}
        \IF{$v^*(\rst{B}) > \lb{v}$}\label{alg:branch-and-bound:improve-lb-start}
        \STATE $\lb{v} \gets v^*(\rst{B})$
        \STATE $\lb{x} \gets x^*(\rst{B})$
        \ENDIF\label{alg:branch-and-bound:improve-lb-end}
        \IF{$\rst{B}$ is not exact} \label{alg:branch-and-bound:prune}
        \STATE $\rlx{B} \gets Relaxed(u)$\label{alg:branch-and-bound:relax}
        \IF{$v^*(\rlx{B}) > \lb{v}$} \label{alg:branch-and-bound:2b}
        \FORALL{$u' \in \rlx{B}.exact\_cutset()$} \label{alg:bab-cutset-start}
        \label{alg:branch-and-bound:cutset}
        \STATE $Fringe.add(u')$
        \ENDFOR \label{alg:bab-cutset-end}
        \ENDIF \label{alg:branch-and-bound:3}
        \ENDIF
        \ENDWHILE
        \RETURN $\lst{\lb{x}, \lb{v}}$ \label{alg:branch-and-bound:4}
    \end{algorithmic}
    \caption{Branch-And-Bound with DD}
    \label{alg:branch-and-bound}
\end{algorithm}
\end{minipage}
}
\hfill
\scalebox{0.8}{
    \begin{minipage}[t]{.6\textwidth}
        \begin{algorithm}[H]
            \begin{algorithmic}[1]
                \color{gray}
                \STATE Create node $r$ and add it to $Fringe$ 
                \STATE $\lb{x} \gets \bot$
                \STATE $\lb{v} \gets -\infty$
                \WHILE{$Fringe$ is not empty}
                    \STATE $u \gets Fringe.pop()$
                    \color{black}
                    \IF{$v|_u^* \le \lb{v}$}        
                        \label{alg:reasoning:local-bounds:pop-skip}
                        \STATE \textbf{continue}
                    \ENDIF
                    \color{gray}
                    \STATE $\rst{B} \gets Restricted(u)$ 
                    \IF{$v^*(\rst{B}) > \lb{v}$}
                        \STATE $\lb{v} \gets v^*(\rst{B})$
                        \STATE $\lb{x} \gets x^*(\rst{B})$
                    \ENDIF
                    \IF{$\rst{B}$ is not exact}
                        \STATE $\rlx{B} \gets Relaxed(u)$
                        \IF{$v^*(\rlx{B}) > \lb{v}$}
                            \color{black}
                            \FORALL{$u' \in \rlx{B}.exact\_cutset()$}
                                \IF{$v|_{u'}^* > \lb{v}$} 
                                    \label{alg:reasoning:local-bounds:skip-enqueue}
                                    \STATE $Fringe.add(u')$
                                    \label{alg:reasoning:local-bounds:do-enqueue}
                                \ENDIF
                            \ENDFOR
                            \color{gray}
                        \ENDIF
                    \ENDIF
                \ENDWHILE
                \RETURN $\lst{\lb{x}, \lb{v}}$
            \end{algorithmic}
            \caption{Local bound pruning}
            \label{alg:reasoning:local-bounds}
        \end{algorithm}
    \end{minipage}
}
\end{center}

\section{Improving the filtering of branch-and-bound MDD}
In the forthcoming paragraphs, we introduce the local bound and present the 
rough upper bound: two reasoning techniques to reinforce the pruning strength 
of \alg{alg:branch-and-bound}.

\subsection{Local bounds (LocB)}
\label{sec:local-bounds}
Conceptually, pruning with local bounds is rather simple: a relaxed MDD 
$\rlx{B}$ provides us with \emph{one} upper bound $v^*(\rlx{B})$ 
on the optimal value of the objective function for some given sub-problem. 
However, in the event where $v^*(\rlx{B})$ is greater than the best known lower 
bound $\lb{v}$ (best current solution) nothing guarantees that all nodes from 
the exact cutset of $\rlx{B}$ admit a longest path to $t$ with a length of 
$v^*(\rlx{B})$. Actually, this is quite unlikely. This is why we propose to 
attach a \emph{``local'' upper bound} to each node of the cutset. This local 
upper bound -- denoted $v|_u^*$ for some cutset node $u$ -- simply records 
the length of the longest r-t path passing through $u$ in the relaxed MDD 
$\rlx{B}$.

In other words, LocB  allows us to refine the information provided by a relaxed 
DD $\rlx{B}$. On one hand, $\rlx{B}$ provides us with $v^*(\rlx{B})$ which is 
the length of the longest r-t path in $\rlx{B}$. As such, it provides an upper 
bound on the optimal value that can be reached from the root node of $\rlx{B}$. 
With the addition of LocB, the relaxed DD provides us with an additional piece 
of information. For each individual node $u$ in the exact cutset of $\rlx{B}$, 
it defines the value $v|_u^*$ which is an upper bound on the value attainable 
from that node.

As shown in \alg{alg:reasoning:local-bounds}, the value $v|_u^*$ can prove 
useful at two different moments. First, in the event where $v|_u^* \le \lb{v}$, 
this value can serve as a justification to not enqueue the subproblem $u$ (line 
\ref{alg:reasoning:local-bounds:skip-enqueue}) since exhausting this subproblem 
will yield no better solution than $\lb{v}$. 
More formally, by definition of a cutset and of LocB, it must be the case that 
the longest r-t path of $\rlx{B}$ traverses one of the cutset nodes $u$ and thus 
that $v^*(\rlx{B}) = v|_u^*$ (where $v|_u^*$ is the local bound of $u$). Hence 
we have: $\exists u \in \text{~cutset of~} \rlx{B} : v^*(\rlx{B}) = v|^*_u$.
However, because $v^*(\rlx{B})$ is the length of the \emph{longest} r-t path of 
$\rlx{B}$, there may exist cutset nodes that only belong to r-t paths shorter 
than $v^*(\rlx{B})$. That is: $
\forall u’ \in \text{~cutset of~} \rlx{B} : v^*(\rlx{B}) \ge v|_{u’}^*$.
Which is why $v|_{u’}^*$ can be stricter than $v^*(\rlx{B})$ and hence let 
LocB be stronger at pruning nodes from the frontier. 

The second time when $v|_u^*$  might come in handy occurs when the node $u$ is 
popped out of the fringe (line \ref{alg:reasoning:local-bounds:pop-skip}). 
Indeed, because the fringe is a global priority queue, any node that has been 
pushed on the fringe can remain there for a long period of time. Thus, chances 
are that the value $\lb{v}$ has increased between the moment when the node was 
pushed onto the fringe (line \ref{alg:reasoning:local-bounds:do-enqueue}) and 
the moment when it is popped out of it. Hence, this gives us an additional 
chance to completely skip the exploration of the sub-problem rooted in $u$.

Let us illustrate that with the relaxed MDD shown on 
Fig.\ref{fig:reasoning:local-bounds}, for which the exact cutset comprises the 
highlighted nodes $a$ and $b$. Please note that because this scenario may occur 
at any time during the problem resolution, we will assume that the fringe is not 
empty when it starts.
Assuming that the current best solution $\lb{v}$ is $20$ when one explores the 
pictured subproblem, we are certain that exploring the subproblem rooted in $a$ 
is a waste of time, because the local bound $v|_a^*$ is only $16$. 
Also, because the fringe was not empty, it might be the case that $b$ was left 
on the 
fringe for a long period of time. And because of this, it might be the case that 
the best known value $\lb{v}$ was improved between the moment when $b$ was 
pushed on the fringe and the moment when it was popped out of it.
Assuming that $\lb{v}$ has improved to $110$ when 
$b$ is popped out of the fringe, it may safely be skipped because $v|_b^*$ 
guarantees that an exploration of $b$ will not yield a better solution than 
$102$.

\alg{alg:computing-local-bounds} describes the procedure to compute the local 
bound $v|^*_u$ of each node $u$ belonging to the exact cutset of a relaxed MDD 
$\rlx{B}$. Intuitively, this is achieved by doing a bottom-up traversal of 
$\rlx{B}$, starting at $t$ and stopping when the traversal crosses the last 
exact layer (line \ref{alg:computing-local-bounds:end-cond}). During that 
bottom-up traversal, the algorithm marks the nodes that are reachable from $t$.
This way, it can avoid the traversal of dead-end nodes. Also, 
\alg{alg:computing-local-bounds} maintains a value $v^*_{\uparrow_t}(u)$ for 
each node $u$ it encounters. This value represents the length of the longest u-t 
path. Afterwards (line \ref{alg:computing-local-bounds:save-lb}), it is summed 
with the length of the longest r-u path $v^*_{r-u}$ to derive the exact value of 
the local bound $v|^*_u$.

\begin{algorithm}
    \begin{algorithmic}[1]
        \STATE $lel \gets$ \text{Index of the last exact layer}
        \STATE $v^*_{\uparrow_t}(u) \gets -\infty ~\text{\textbf{for each 
        node}}~u \in \rlx{B}$ \hfill\texttt{// init. longest u-t path}
        \STATE $mark(t) \gets $true
        \STATE $v^*_{\uparrow_t}(t) \gets 0$ \hfill\texttt{// longest t-t path}
        \FORALL{$i = n$ to $lel$}\label{alg:computing-local-bounds:end-cond}
           \FORALL{node $u \in L_{i}$}
                \IF{$mark(u)$} 
                    \FORALL{arc $a = (u', u)$ incident to $u$}
                        \STATE $mark(u') \gets$ true
                        \STATE $v^*_{\uparrow_t}(u') \gets 
                        \max ( v^*_{\uparrow_t}(u'), v^*_{\uparrow_t}(u) + v(a) 
                        ) 
                        $  \hfill\texttt{// longest u'-t path}
                        \label{alg:computing-local-bounds:lp_from_bot}
                    \ENDFOR
                \ENDIF
           \ENDFOR
        \ENDFOR
        \FORALL{node $u \in \rlx{B}.exact\_cutset()$}
            \IF{$mark(u)$}
                \STATE $v|^*_u \gets v^*_{r-u} + v^*_{\uparrow_t}(u)$  
                \hfill\texttt{// longest r-u path + longest u-t path}
                \label{alg:computing-local-bounds:save-lb}
            \ELSE
                \STATE $v|^*_u \gets -\infty$ 
                \label{alg:computing-local-bounds:ditch-not-marked}
            \ENDIF
        \ENDFOR
    \end{algorithmic}
    \caption{Computing the local bounds}
    \label{alg:computing-local-bounds}
\end{algorithm}

\paragraph{Caveat for the First Exact Layer Cutset (FEL)}
\alg{alg:computing-local-bounds} is sufficient to cope with the cases where  
$\rlx{B}$ uses either a LEL cutset or an FC cutset. However, in the event where 
$\rlx{B}$ would implement another strategy -- such as ie the First Exact Layer 
(FEL), then \alg{alg:computing-local-bounds} would need to be adapted and 
perform a complete backward traversal of $\rlx{B}$. Concretely, this means that 
line \ref{alg:computing-local-bounds:end-cond} would need to read $\text{for 
    all } i \in n \text{ down to } 0$ instead of $\text{down to } lel$. This 
change is mandatory. Failing to implement it could result in too many nodes 
being pruned away because of the line 
\ref{alg:computing-local-bounds:ditch-not-marked}.

\subsubsection{Complexity matters...} 
As shown in \alg{alg:computing-local-bounds}, the computation of the local 
bound $v|^*_u$ of some node $u$ from the exact cutset of a relaxed MDD 
$\rlx{B}$ only requires the ability to remember if $u$ is reachable from $t$ 
($mark(u)$) and to sum the lengths of the longest r-u path $v^*_{r-u}$ and the 
longest u-t path $v^*_{\uparrow_t}(u)$. To that end, it is sufficient to only 
store one flag and two integers in each node. Thus, the spatial complexity of 
implementing local bounds is $\Theta(1)$ per node. 

Similarly, the backwards traversal of $\rlx{B}$ as performed in 
\alg{alg:computing-local-bounds} only visits a subset of the nodes and arcs 
that have been created during the compilation of $\rlx{B}$. The 
time complexity of \alg{alg:computing-local-bounds} is thus bounded by 
$O(|U|\times|A|)$. Actually, we even know that this bound is lesser or equal to 
that of the top-down compilation because the width-bounding procedure 
reduces the number of nodes in $\rlx{B}$. Hence it reduces 
the number of nodes potentially visited by \alg{alg:computing-local-bounds}. It 
follows that the use of local bounds has no impact on the time complexity of 
the derivation of a relaxed MDD.

\subsection{Rough upper bound (RUB)}\label{sec:rub}
Rough upper bound pruning departs from the following observation: assuming the 
knowledge of a lower bound $\underline{v}$ on the value of $v^*$, and assuming 
that one is able to swiftly compute a rough upper bound $\overline{v_s}$ on the 
optimal value $v^*_s$ of the subproblem rooted in state $s$; any node $u$ of a 
MDD having a rough upper bound
$\overline{v_{\sigma(u)}} \le \underline{v}$ may be discarded as it is 
guaranteed not to improve the best known solution. This is pretty much the same 
reasoning that underlies the whole branch-and-bound idea. But here, it is used 
to prune portions of the search space explored \emph{while compiling} 
approximate MDDs.

To implement RUB, it suffices to adapt the MDD compilation procedure (top-down , 
iterative refinement, ...) and introduce a check that avoids creating a node 
$u'$ with state $next$ when $\overline{v_{next}} \le \underline{v}$.

The key to RUB effectiveness is that RUB is used while compiling the restricted 
and relaxed DDs. As such, its computation does not directly appear in Alg. 1, 
but rather is accounted within the compilations of $Restricted(u)$ and 
$Relaxed(u)$
from Alg. 1. Thus, it really is not used as yet-an-other-bound competing with 
that of line 12, but instead to speed up the computation of restricted and 
relaxed DDs. More precisely, this speedup occurs because the compilation of the 
DDs discards some nodes that would otherwise be added to the next layer of the 
DD and then further expanded, which are ruled out by RUB.
A second benefit of using RUBs is that it helps tightening the bound derived 
from a relaxed DD (Alg.1 line 12). Because the layers that are generated in a 
relaxed DD are narrower when applying RUB, there are fewer nodes exceeding the 
maximum layer width. The operator $\oplus$ hence needs to merge a smaller set of 
nodes in order to produce the relaxation. 

The dynamics of RUB is graphically illustrated by 
Fig.-\ref{fig:reasoning:rough-ub} where the set of highlighted nodes can be 
safely elided since the (rough) upper bound computed in node $s$ is lesser than 
the best lower bound.

\begin{figure}[ht]
    \begin{minipage}{.48\textwidth}
        \centering
        \scriptsize
        \ctikzfig{graphs/local_bounds}
        \caption{
            An example relaxed-MDD having an exact cutset $\set{a, b}$ with 
            local bounds $v|_a^*$ and $v|_b^*$. The nodes with a simple border 
            represent exact nodes and those with a double border represent 
            ``inexact'' nodes. The edges along the longest path are displayed in 
            bold.}
        \label{fig:reasoning:local-bounds}
    \end{minipage}
    \hfill
    \begin{minipage}{.48\textwidth}
        \centering
        \scriptsize
        \ctikzfig{graphs/rough_ub}
        \caption{
            Assuming a lower bound $\lb{v}$ of 100 and a rough upper bound 
            $\ub{v_s}$ of 42 for the node $s$, all the highlighted nodes (in 
            red, with a dashed border) may be pruned from the MDD.}
        \label{fig:reasoning:rough-ub}
    \end{minipage}
\end{figure}

\subsubsection{Important Note}
It is important to understand that because the RUB is computed at each node of 
each restricted and relaxed MDD compiled during the instance resolution, it must 
be extremely inexpensive to compute. This is why RUB is best obtained from a 
fast and simple problem specific procedure.

\section{Experimental Study}
In order to evaluate the impact of the pruning techniques 
proposed above, we conducted a series of experiments on four problems.
In particular, we conducted experiments on the Maximum Independent Set Problem 
(MISP), the Maximum Cut Problem (MCP), the Maximum Weighted 2-Satisfiablility 
Problem (MAX2SAT) and the Traveling Salesman Problem with Time Windows (TSPTW). 
For the first three problems, we generated sets of random instances which we 
attempted to solve with different configurations of our own open source solver 
written in Rust 
\cite{gillard:20:ddo}\footnote{https://github.com/xgillard/ddo}. For TSPTW, we 
reused openly available sets of benchmarks which are usually used to assess the 
efficiency of new solvers for TSPTW\cite{lopez:20:tsptw}. 
Thanks to the generic nature of our framework, the model and all heuristics used 
to solve the instances were the same for all experiments. This allowed us to 
isolate the impact of RUB and LocB on the solving performance and neutralize 
unrelated factors such as variable ordering. Indeed, the only variations between 
the different solver flavors relate to the presence (or absence) of RUB and LocB.
All experiments were run on the same physical machine equipped with an AMD6176 
processor and 48GB of RAM. A maximum time limit of 1800 seconds was allotted to 
each configuration to solve each instance.

The models we used for MISP, MCP and MAX2SAT are direct translations of 
the ones described in \cite{bergman:16:branch-and-bound}. The details of the 
RUBs we formulated for these problems are given in the appendixes 
\ref{appendix:rub:misp}, \ref{appendix:rub:mcp} and \ref{appendix:rub:max2sat}. 
The DP formulation we used for TSPTW, its relaxation and RUB are detailed in 
appendix \ref{appendix:rub:tsptw}.

\paragraph{MISP.}\label{sec:xp:misp}
To assess the impact of RUB and LocB on MISP, we generated random graphs based 
on the Erdos-Renyi model G(n, p) \cite{erdos-renyi:59} with the number of 
vertices n = 250, 500, 750, 1000, 1250, 1500, 1750 and the probability of having 
an edge connecting any two vertices p = 0.1, 0.2, ... , 0.9. The weight of the 
edges in the generated graphs were drawn uniformly from the set $\{-5, -4, -3, 
-2, -1, 1, 2, 3, 4, 5\}$. We generated 10 instances for each combination of size 
and density (n, p).

\paragraph{MCP.}\label{sec:xp:mcp}
In line with the strategy used for MISP, we generated random MCP instances as 
random graphs based on the Erdos-Renyi model G(n, p). These graphs were generated 
with the number of vertices n = 30, 40, 50 and the probability p 
of connecting any two vertices = 0.1, 0.2, 0.3, .., 0.9. The 
weights of the edges in the generated graphs were drawn uniformly among $\{-1, 
1\}$. Again, we generated 10 instances per combination n, p. 

\paragraph{MAX2SAT.}\label{sec:xp:max2sat}
Similar to the above, we used random graphs based the Erdos-Renyi model G(n, p) 
to derive MAX2SAT instances. To this end, we produced graphs with n = 
60, 80, 100, 200, 400, 1000 (hence instances with 30, 40, 50, 100, 200 and 500 
variables) and p = 0.1, 0.2, 0.3, .. , 0.9. For each combination of size 
(n) and density (p), we generated 10 instances. The weights of the clauses
in the generated instances were drawn uniformly from the set $\{1, 2, 3, 5, 6, 
7, 8, 9, 10\}$.

\paragraph{TSPTW.}\label{sec:xp:tsptw}
To evaluate the effectiveness of our rules on TSPTW, we used the 467 instances 
from the following suites of benchmarks, which are usually used to assess the 
efficiency of new TSPTW solvers. AFG \cite{bench:afg}, Dumas \cite{bench:dumas}, 
Gendreau-Dumas \cite{bench:gendreau-dumas}, Langevin \cite{bench:langevin}, 
Ohlmann-Thomas \cite{bench:ohlmann-thomas}, 
Solomon-Pesant \cite{bench:solomon-pesant} and 
Solomon-Potvin-Bengio \cite{bench:potvin-bengio}.

\begin{figure}
    \centering
    \includegraphics[width=\textwidth]{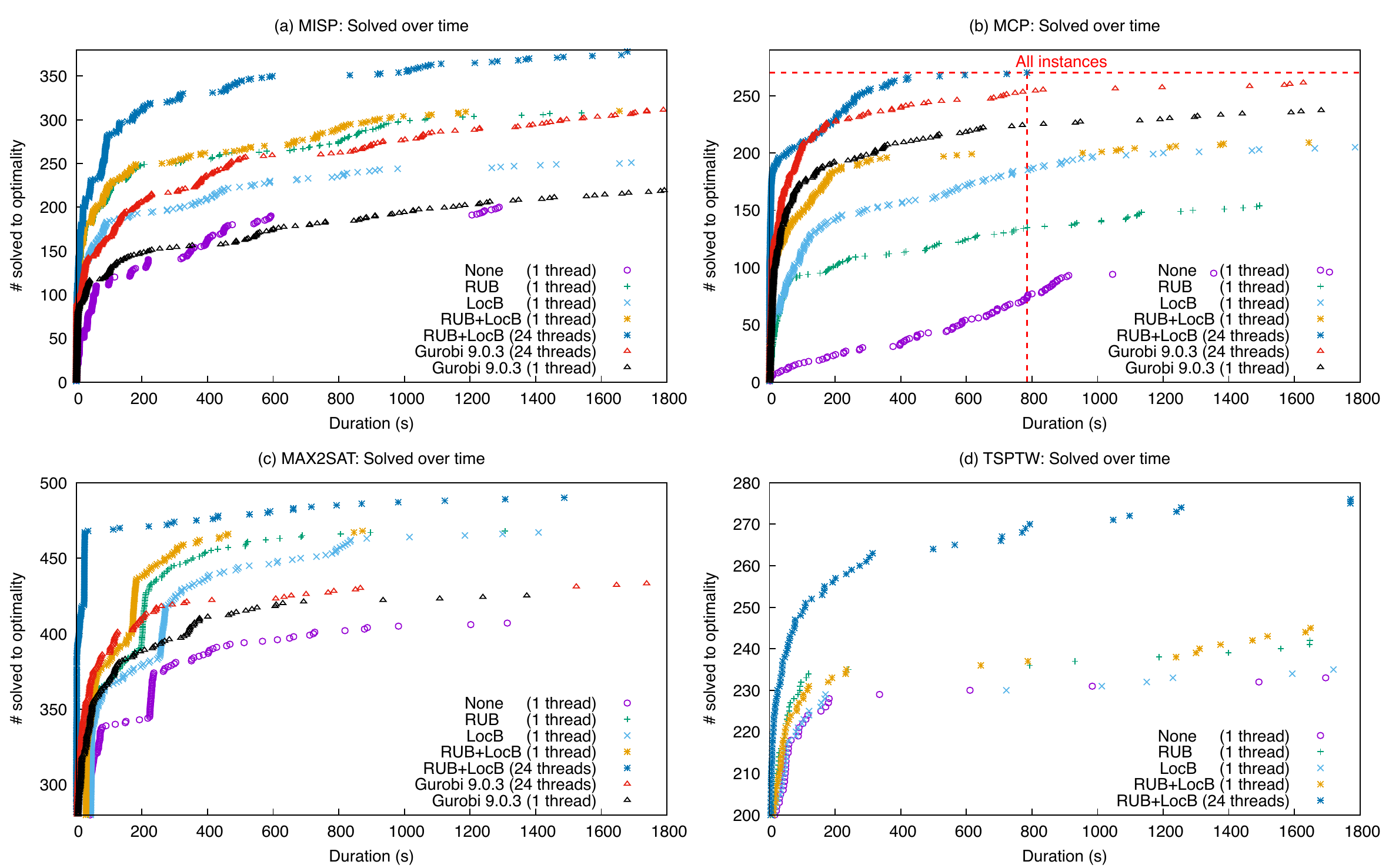}
    \caption{Number of solved instances over time for each considered problem}
    \label{fig:xp:solved}
\end{figure}

Figure \ref{fig:xp:solved} gives an overview of the results from our 
experimental study. It respectively depicts the evolution over time 
of the number of instances solved by each technique for MISP (a), MCP (b) and 
MAX2SAT (c) and TSPTW (d). 

As a first step, our observation of the graphs will focus on the differences 
that arise between the single threaded configurations of our \emph{ddo} solvers. 
Then, in a second phase, we will incorporate an existing state-of-the-art ILP 
solver (Gurobi 9.0.3) in the comparison. Also, because both Gurobi and our 
\emph{ddo} library come with built-in parallel computation capabilities, we will 
consider both the single threaded and parallel (24 threads) cases. This second 
phase, however, only bears on MISP, MCP and MAX2SAT by lack of a Gurobi TSPTW 
model.

\paragraph{DDO configurations}
The first observation to be made about the four graphs in 
Fig.\ref{fig:xp:solved}, is that for all considered problems, both RUB and 
LocB outperformed the 'do-nothing' strategy; thereby showing the relevance of 
the rules we propose. It is not clear however which of the two rules brings the 
most improvement to the problem resolution. Indeed, RUB seems to be the driving 
improvement factor for MISP (a) and TSPTW (d) and the impact of LocB appears to 
be moderate or weak on these problems. However, it has a much higher impact  
for MCP (b) and MAX2SAT (c). In particular, LocB appears to be the driving 
improvement factor for MCP (b). This is quite remarkable given that LocB 
operates in a purely black box fashion, without any problem-specific knowledge. 
Finally, it should also be noted that the use of RUB and LocB are not mutually 
exclusive. Moreover, it turns out that for all considered problems, the 
combination RUB+LocB improved the situation over the use of any single rule. 

Furthermore, Fig.\ref{fig:xp:gap} confirms the benefit of
using both RUB \emph{and} LocB together rather than using any single technique. 
For each problem, it measures the “performance” of using RUB+LocB vs the best 
single technique through the end gap. The end gap is defined as 
$\left(100 * \frac{|UB|-|LB|}{|UB|}\right)$. This metric allows us to account 
for all instances, including the ones that could not be solved to optimality. 
Basically, a small end gap means that the solver was able to confirm a tight 
confidence interval of the optimum. Hence, a smaller gap is better. On each 
subgraphs of Fig.\ref{fig:xp:gap}, the distance along the x-axis represents the 
end gap for reach instance when using both RUB and LocB whereas the distance 
along y-axis represents the end gap when using the best single technique for the 
problem at hand. Any mark above the diagonal shows an instance for which using 
both RUB and LocB helped reduce the end gap and any mark below that line 
indicates an instance where it was detrimental.

From graphs \ref{fig:xp:gap}-a, \ref{fig:xp:gap}-c and \ref{fig:xp:gap}-d it 
appears that the combination RUB+LocB supersedes the use of RUB only. 
Indeed the vast majority of the marks sit above the diagonal and the rest on it. 
This indicates a beneficial impact of using both techniques even for the hardest 
(unsolved) instances. The case of MCP (graph \ref{fig:xp:gap}-b) is less clear 
as most of the marks sit on the diagonal. Still, we can only observe three marks 
below the diagonal and a bit more above it. Which means that even though the use 
of RUB in addition to LocB is of little help in the case of MCP, its use does 
not degrade the performance for that considered problem.

\begin{figure}
    \centering
    \includegraphics[scale=.55]{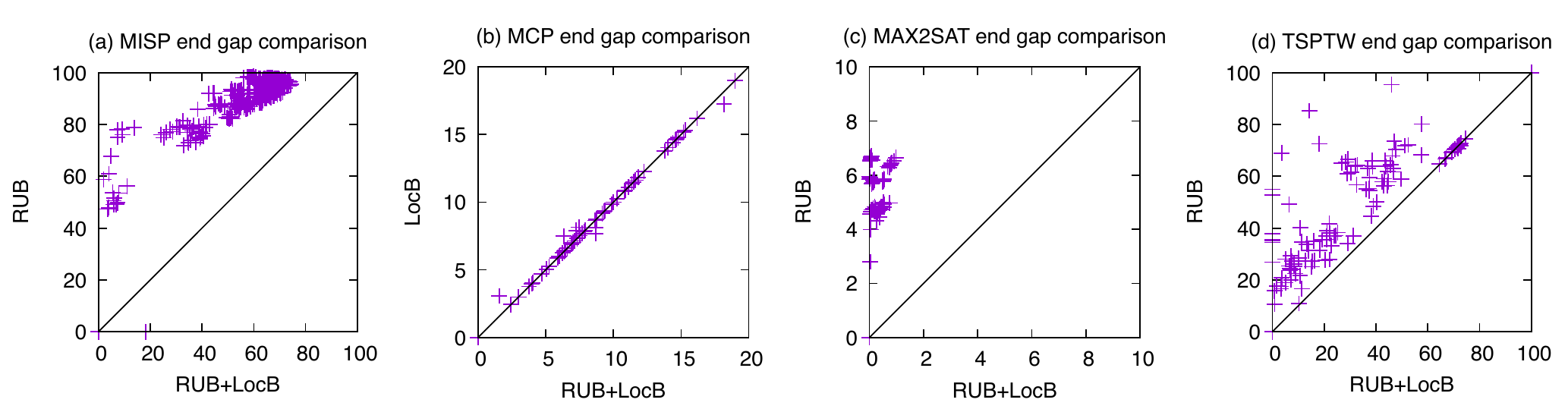}
    \caption{End gap: The benefit of using both techniques vs the best single 
    one}
    \label{fig:xp:gap}
\end{figure}

\paragraph{Comparison with Gurobi 9.0.3}
The first observation to be made when comparing the performance of Gurobi vs the 
DDO configurations, is that when running on a single thread, ILP outperforms the 
basic DDO approach (without RUB and LocB). Furthermore, Gurobi turns out to be 
the best single threaded solver for MCP by a fair margin. However, in the MISP 
and MAX2SAT cases, Fig.~\ref{fig:xp:solved} shows that the DDO solvers 
benefitting 
from RUB and LocB were able to solve more instances and to solve them faster 
than Gurobi. Which underlines the importance of RUB and LocB.

When lifting the one thread limit, one can see that the DD-based approach 
outperform ILP on each of the considered problems. In particular, in the case of 
MCP for which Gurobi is the best single threaded option; our DDO solver was 
able to find and prove the optimality of all tested instances in a little less 
than 800 seconds. The ILP solver, on the other end, was not able to prove the 
optimality of the 9 hardest instances within 30 minutes. Additionally, we also 
observe that in spite of the performance gains of MIP when running in parallel, 
Gurobi fails to solve as many MISP and MAX2SAT instances and to solve them as 
fast as the single threaded DDO solvers with RUB and LocB. This emphasizes once 
more the relevance of our techniques. 
It also shows that the observation from \cite{bergman:14:parallel} still hold 
today: despite the many advances of MIP the DDO approach still scales better 
than MIP on the considered problems when invoked in parallel.

\section{Previous work}\label{sec:related}
DDO emerged in the mid' 2000's when \cite{hooker:06:discrete-optimization} 
proposed to use decision diagrams as a way to solve discrete optimization 
problems to optimality. More or less concomitantly, \cite{hooker:07:relaxed-mdd} 
devised relaxed-MDD even though the authors envisioned its use as a CP 
constraint store rather than a means to derive tight upper bounds for 
optimization problems. Then, the relationship between decision diagrams and 
dynamic programming was clarified by \cite{hooker:13:dd-dp}. 

Recently, Bergman, Ciré and van Hoeve investigated the various ways to 
compile decision diagrams for optimization (top-down, construction by 
separation) \cite{cire:14:decision-diagrams}. They also investigated the 
heuristics used to parameterize these DD compilations. In particular, they 
analyzed the impact of variable ordering in
\cite{cire:14:decision-diagrams,bergman:14:optimization-bounds} and node 
selection heuristics (for merge and deletion) in 
\cite{bergman:14:optimization-bounds}. Doing so, they empirically demonstrated 
the crucial impact of variable ordering on the tightness of the derived bounds 
and highlighted the efficiency of minLP as a node selection heuristic. Later 
on, the same authors proposed a complete branch-and-bound algorithm based on DDs
\cite{bergman:16:branch-and-bound}. This is the algorithm which we propose to 
adapt with extra reasoning mechanisms and for which we provide a generic 
open-source implementation in Rust \cite{gillard:20:ddo}. The impressive 
performance of DDO triggered some theoretical research to analyze the quality of 
approximate MDDs \cite{bergman:16:theoretical} and the correctness of the 
relaxation operators \cite{hooker:17:job-sequencing}.

This gave rise to new lines of work. The first one focuses on the 
resolution of a larger class of optimization problems; chief of which 
multi-objective problems \cite{bergman:16:multiobjective} and
problems with a non-linear objective function. These are either solved by 
decomposition \cite{bergman:16:multiobjective} or by using DDO to strengthen 
other IP techniques \cite{davarnia:18:approx-non-linear}. A second trend aims 
at hybridizing DDO with other IP techniques. For instance, by using Lagrangian 
relaxation \cite{hooker:19:improved-job-sequencing} or by solving a MIP 
\cite{bergman:17:optimal-bdd-relax} to derive with very tight bounds. But the 
other direction is also under active investigation: for example, 
\cite{tjandraatmadja:18:decision-diagrams,tjandraatmadja:19:cuts-from-relaxed} 
use DD to derive tight bounds which are used to replace LP relaxation in a 
cutting planes solver. Very recently, a third hybridization approach has been 
proposed by Gonzàlez et al.\cite{gonzalez:20:mip-dp-ml}. It adopts the 
branch-and-bound MDD perspective, but whenever an upper bound is to be derived, 
it uses a trained classifier to decide whether the upper bound is to be computed 
with ILP or by developing a fixed-width relaxed MDD. 

The techniques (ILP-cutoff pruning and ILP-cutoff heuristic) proposed by 
Gonzalez et al.\cite{gonzalez:20:mip-dp-ml} are related to RUB and LocB in the 
sense that all techniques aim at reducing the search space of the problem. 
However, they fundamentally differ as ILP-cutoff pruning acts as a  replacement 
for the compilation of a relaxed MDD whereas the goal of RUB is to speed up the 
development of that relaxed MDD by removing nodes \emph{while} the MDD is being 
generated. The difference is even bigger in the case of ILP-cutoff heuristic vs 
LocB: the former is used as a primal heuristic while LocB is used to filter out 
sub-problems that can bear no better solution. In that sense, LocB belongs more 
to the line of work started by 
\cite{hooker:07:relaxed-mdd,hooker:08:propagting-separable-equalities,hooker:10:mdd-cp}:
it enforces the constraint $lb \le f(x) \le ub$ and therefore provokes the 
deletion of nodes and arcs that cannot lead to the optimal solution.

More recently, Horn et al explored an idea in \cite{horn:2021} which closely 
relates to RUB. They use ``fast-to-compute dual bounds'' as an admissible 
heuristic to guide the compilation of MDDs in an A* fashion for the 
prize-collecting TSP. It prunes portions of the state space during the MDD 
construction, similarly to when RUB is applied. Our approach differs from that 
of \cite{horn:2021} in that we attempt to incorporate problem specific knowledge 
in a framework that is otherwise fully generic. More precisely, it is perceived 
here as a problem-specific pruning that exploits the combinatorial structure 
implied by the state variables. It is independent of other MDD compilation 
techniques, e.g., our techniques are compatible with node merge ($\oplus$) 
operators and other methodologies defined in the DDO literature. We also 
emphasize that, as opposed to more complex LP-based heuristics that are now 
typical in A* search, we investigate quick methodologies that are also easy to 
incorporate in a MDD branch and bound.

\section*{Conclusion and future work}\label{sec:conclusion}
This paper presented and evaluated the impact of the local bound and rough upper 
bound techniques to strengthen the pruning of the branch-and-bound MDD 
algorithm. Our experimental study on MISP, MCP, MAX2SAT and TSPTW confirmed the 
relevance of these techniques. In particular, our experiments have shown that 
devising a fast and simple rough upper bound is worth the effort as it can 
significantly boost the efficiency of a solver. Similarly, our experiments 
showed that the use of local bound can significantly improve the efficiency of 
DDO solver despite its problem agnosticism. Furthermore, it revealed that a 
combination of RUB and LocB supersedes the benefit of any single reasoning 
technique. 
These results are very promising and we believe that the public 
availability of an open source DDO framework implementing RUB and LocB might 
serve as a basis for novel DP formulation for classic problems. 

\newpage
\bibliographystyle{splncs04}
\bibliography{ddo}

\appendix
\section{MISP}\label{appendix:rub:misp}
Assuming, the MISP DP formulation from \cite{bergman:16:branch-and-bound}, where 
each state $s$ from the DP model is a bit vector $s_0..s_n$. And each bit $s_i$ 
from $s$ indicates whether or not the vertex $i$ can possibly be part of the 
maximum independent set \emph{given the decisions labelling the path between the 
root state $r$ and $s$}. The merge operator $\oplus$ simply yields the union of 
the states that must be combined. The $\Gamma$ operator used to relax edges 
entering merged node does not alter the weight of these edges.

A RUB for MISP can be computed as 
$$
\overline{v_s} = v^*(s) + \sum_{i \in 0..n-1} s_i (w_i)^+
$$

where $v^*(s)$ is the length of the longest $r-s$ path and $w_i$ denotes the 
weight of vertex $i$ and $(w_i)^+ = \max(0, w_i)$. This upper bound is correct 
because the set of vertices $\left\{i \mid s_i = 1\right\}$ is a 
superset of the optimal solution to the subproblem rooted in $s$. (Proof is 
trivial).

\section{MCP}\label{appendix:rub:mcp}
\subsection{DP model}
The DP model we use for the MCP and its relaxation are those from 
\cite{bergman:16:branch-and-bound}. Their correctness is proved in the same 
paper. In this model, a state is a tuple of integer 
values representing the marginal benefit from assigning a vertex to the 
partition $T$ of the cut.

Formally it is defined as follows: 
\begin{itemize}
\item State spaces: $S_k = \left\{ s^k \in \R^n \mid s^{k}_j = 0, j=0, \dots, k 
\right\}$ with root and terminal states equal to $(0, \dots, 0)$

\item Root value: $v_r = \sum_{0 \le i < j < n } (w_{ij})^-$

\item State transition: 
$t(s^{k-1}, x_{k-1}) = (0, \dots, 0, s^{k}_{k}, \dots, s^k_{n-1})$ where 
$$
s^k_l = 
\left\{
\begin{array}{ll}
        s^{k-1}_l + w_{kl} &\text{~if~} x_{k-1} = S \\
        s^{k-1}_l - w_{kl} &\text{~if~} x_{k-1} = T \\
\end{array}
\right\}, l = k, \dots, n-1
$$

\item Transition cost: $h(s^0, x_0) = 0$ for $x_0 \in \left\{S, T\right\}$, and 
$$
h(s^k, x_k) = 
\left\{
\begin{array}{lll}
(-s^k_k)^+ &+ \sum_{\substack{l > k\\s^k_l w_{kl}} \le 0} \min\{|s^k_l|, 
|w_{kl}| 
\}, 
&\text{~if~} x_k = S \\
(s^k_k)^+ &+ \sum_{\substack{l > k\\s^k_l w_{kl}} \le 0} \min\{|s^k_l|, |w_{kl}| 
\}, 
&\text{~if~} x_k = T \\
\end{array}
\right\}
$$
\end{itemize}

\subsection{Relaxation}
For the subset $M$ of nodes included in the layer $L_k$ of the MDD 
being compiled, the $\oplus$ and $\Gamma$ operators are defined as follows.
$$
\oplus(M)_l = \left\{
\begin{array}{ll}
\substack{\min\left\{u_l\right\}\\u \in M}, &\text{~if~} u_l \ge 0 \text{~for 
all~} u \in M\\
\substack{-\min\left\{u_l\right\}\\u \in M}, &\text{~if~} u_l \le 0 \text{~for 
all~} u \in M\\
0, &\text{otherwise}
\end{array}
\right\}, l = k \dots n-1
$$

$$
\Gamma_M(v, u) = v + \sum_{l \ge k} \left(|u_l| - |\oplus(M)_l| \right)
\text{~, for all~} u \in M
$$

\subsection{RUB}
A RUB for MCP is efficiently computed as follows:
$$
\overline{s^k} = v^*(s^k) + \sum_{k \le i < n} |s^k_i| + \mathcal{V}_k + 
\mathcal{N}_k - v_r
$$

where $s^k$ is a state from the $k^\text{th}$ layer, $v^*(s^k)$ is the length of 
the longest $r-s^k$ path and $v_r$ is the root value of the DP model. 
$\mathcal{V}_k = \sum_{k \le i < j < n} (w_{ij})^+$ is an over estimation of the 
maximum cut value on the remaining induced graph (sum of the weight of all the 
positive arcs in the induced graph). $\mathcal{N}_k = \sum_{0 \le i < j < k} 
(w_{ij})^-$ is the partial sum of the weights of all negative arcs 
interconnecting vertices which have already been assigned to a partition.

This upper bound is very fast to compute (amortized $O(n)$) since $v^*(s^k)$ is 
known when node $s^k$ must be expanded and the quantities $v_r$, $\mathcal{V}_k$ 
and $\mathcal{N}_k$ can be pre-computed once for all (assuming a fixed variable 
ordering).

\begin{theorem}\label{thm:mcp:rub}
    $$
    v|^*_{s^k} \le \overline{s^k}
    $$
\end{theorem}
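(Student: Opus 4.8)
The plan is to cancel the term common to both sides and reduce the statement to a claim about the completion alone. By the definition used in Algorithm~\ref{alg:computing-local-bounds}, the local bound factors as $v|^*_{s^k} = v^*(s^k) + v^*_{\uparrow_t}(s^k)$, where $v^*_{\uparrow_t}(s^k)$ is the longest $s^k$-$t$ path in the relaxed MDD, and the same longest $r$-$s^k$ length $v^*(s^k)$ also opens the expression for $\overline{s^k}$. Hence it suffices to prove $v^*_{\uparrow_t}(s^k) \le \sum_{k \le i < n}|s^k_i| + \mathcal{V}_k + \mathcal{N}_k - v_r$. I would first establish this in the exact case, where $v^*_{\uparrow_t}(s^k)$ is precisely the optimal value of the subproblem rooted at $s^k$, and only afterwards lift it to a relaxed diagram.

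For the exact bound I would account for the objective contribution edge by edge, classifying every edge relative to the frontier $k$. Edges with both endpoints below $k$ are already fixed and folded into $v^*(s^k)$; edges with both endpoints at least $k$ contribute at most $(w_{ij})^+$ to any cut, summing to exactly $\mathcal{V}_k$; and each crossing edge, with a single endpoint below $k$, has its net influence already recorded in the state component $s^k_i$ of its future endpoint, so placing vertex $i$ optimally realizes at most $|s^k_i|$, summing to $\sum_{k \le i<n}|s^k_i|$. The quantity $\mathcal{N}_k - v_r = \sum_{i<j,\, j \ge k}\bigl(-(w_{ij})^-\bigr)$ then refunds exactly the pessimistic negative-weight charge that $v_r$ places on every edge still touching an unassigned vertex, since in the best case those edges are left uncut. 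Matching these three aggregates against the two parts of each transition cost $h(s^i,x_i)$, namely the self term $(\mp s^i_i)^+ \le |s^i_i|$ and the guaranteed term $\sum_{l>i,\, s^i_l w_{il}\le 0}\min\{|s^i_l|,|w_{il}|\}$, yields the claim. The accounting is deliberately optimistic, which is what makes the bound rough.

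The step I expect to demand the most care is that this matching cannot be carried out by a clean layer-by-layer telescoping of a residual potential. The negative-weight refund attached to an edge $(i,l)$ with $l>i$ is credited at layer $l$, through $\mathcal{N}_l - \mathcal{N}_{l+1}$, whereas the $\min$-term cost it must offset is already incurred at layer $i$; for a single arc the inequality $h(s^i,x_i) + \bigl(\text{residual at } s^{i+1}\bigr) \le \bigl(\text{residual at } s^i\bigr)$ can in fact fail, for either decision, precisely when $w_{il}<0$ and $|s^i_l|<|w_{il}|$. I would therefore sum the per-arc contributions over the whole $s^k$-$t$ path and reorganize the total by edge, so that each edge's cost and its refund meet in the aggregate even though they are misaligned layer-wise.

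Finally I would extend the bound from the exact development to the relaxed MDD. The only new ingredient is that a $\Gamma$-relaxed arc entering a merged node $\mathcal{M}=\oplus(M)$ carries the extra weight $\sum_{l\ge k}\bigl(|u_l|-|\oplus(M)_l|\bigr)$, while $\oplus$ simultaneously shrinks every magnitude to $|\oplus(M)_l|\le|u_l|$. I would show that this inflation equals exactly the decrease it causes in the downstream budget $\sum_{l}|s_l|$, so that the edge-wise total is conserved across any merge; this is nothing but the conservation guaranteed by the correctness of the $\oplus$ and $\Gamma$ operators. Consequently the relaxed longest $s^k$-$t$ path obeys the same bound as its exact counterpart, and since the relaxed diagram over-approximates, the true subproblem optimum is sandwiched below $v|^*_{s^k}\le\overline{s^k}$, which completes the argument.
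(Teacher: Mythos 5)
Your plan is correct and, at its core, it is the same decomposition the paper uses: the prefix term $v^*(s^k)$, the residual internal cut bounded by $\mathcal{V}_k$ (the paper's inequality (\ref{eq:mcp:vk})), the crossing cut bounded by $\sum_{k \le i < n}|s^k_i|$ (the paper's inequality (\ref{eq:mcp:ski})), and the correction $\mathcal{N}_k - v_r$ refunding the pessimistic negative-edge charges contained in $v_r$ (the paper's identity (\ref{eq:mcp:nk})). Where you genuinely depart from the published proof is in how much of the real work you acknowledge and carry out. The paper splits $v|^*_{s^k}$ into the three subproblem values $v^*_{\mathcal{P}_{0,k}}(s^k) + v^*(\mathcal{P}_{k,n}) + \mathcal{C}_k$, derives (\ref{eq:mcp:nk}) by a substitution that is close to tautological, and proves (\ref{eq:mcp:ski}) with a two-sentence contradiction argument; it never engages with the DP's transition-cost accounting. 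Your observation that a layer-by-layer potential argument fails is correct and is precisely what hides behind that brevity: with $s^i_l = 0$ and $w_{il} = -10$, assigning vertex $i$ (either way) incurs cost $h = 0$ yet inflates the residual budget $\sum_l |s_l| + \mathcal{V} + \mathcal{N} - v_r$ by $|w_{il}|$, so the per-arc inequality $h(s^i,x_i) + R(s^{i+1}) \le R(s^i)$ is violated, and only the aggregated, per-edge regrouping you propose closes this. Likewise, your final step --- using that the arc inflation $\Gamma_M(v,u) - v = \sum_{l \ge k}\left(|u_l| - |\oplus(M)_l|\right)$ exactly offsets the shrinkage of the downstream budget $\sum_l |s_l|$ --- handles merged nodes and relaxed arcs, about which the paper's proof says nothing even though $v|^*_{s^k}$ is defined on the relaxed MDD, where paths through $s^k$ may traverse $\Gamma$-relaxed weights; that step is needed for the theorem as stated, not optional. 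In short: same skeleton and same three bounding ingredients as the paper, but your version identifies and repairs two real gaps that the published proof leaves to the reader.
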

%
%
%
%
%
\begin{proof}
    Because the objective function to maximize is additively separable, we can 
    decompose global problem as a sum of three parts:
    \begin{enumerate}
    \item the longest $r-s^k$ path in the subproblem $\mathcal{P}_{0,k}$ bearing
        only on variables for which a decision has been made $\left[0 \dots k-1 
        \right]$. The length of this path is denoted 
        $v^*_{\mathcal{P}_{0,k}}(s^k)$.
    \item the value of the best solution to the subproblem $\mathcal{P}_{k,n}$
        bearing only on variables for which no decision has been made $\left[k 
        \dots n-1 \right]$. We use $v^*(\mathcal{P}_{k,n})$ to denote that value.
    \item the value $\mathcal{C}_k$ of the maximum cut considering only vertices
        for which a decision has been made $\left[0 \dots k-1 \right]$ and 
        vertices for which no decision has been made $\left[k \dots n-1 \right]$.
    \end{enumerate}

    Hence, we have $v|^*_{s^k} = v^*_{\mathcal{P}_{0,k}}(s^k) +  
                                 v^*(\mathcal{P}_{k,n}) +
                                 \mathcal{C}_k$ for any $0 \le k < n$.
                                 
    According to our decomposition methodology, we would like to start by 
    showing that 

    \begin{equation}\label{eq:mcp:nk}
    \begin{array}{rl}
    v^*_{\mathcal{P}_{0,k}}(s^k) &= v^*(s^k) - v_r + \mathcal{N}_k \\
    v_r(\mathcal{P}_{0,k}) + \sum_{0 \le i < k} h(s^i, x_i) &=  v^*(s^k) - v_r + 
    \mathcal{N}_k \\
    \sum_{0 \le i < j < k} (w_{ij})^- + \sum_{0 \le i < k} h(s^i, x_i) &=  
    v^*(s^k) - v_r + \mathcal{N}_k \\
    \mathcal{N}_k + \sum_{0 \le i < k} h(s^i, x_i) &=  
    v^*(s^k) - v_r + \mathcal{N}_k \\
    \mathcal{N}_k + \sum_{0 \le i < k} h(s^i, x_i) &=  
    v_r + \sum_{0 \le i < k} h(s^i, x_i) - v_r + \mathcal{N}_k \\
    \mathcal{N}_k + \sum_{0 \le i < k} h(s^i, x_i) &=  
    \sum_{0 \le i < k} h(s^i, x_i) + \mathcal{N}_k \\
    \end{array}
    \end{equation}

    Then, we show that 
    \begin{equation}\label{eq:mcp:vk}
        v^*(\mathcal{P}_{k,n}) \le \mathcal{V}_k
    \end{equation}
    By definition of $v^*(\mathcal{P}_{k,n})$ as the value of the maximum cut 
    in the residual graph we have: $v^*(\mathcal{P}_{k,n}) = 
    \max \left\{ \sum w_{ij} \mid k \le i < j < n, x_i \ne x_j \right\} $. From
    there, it immediately follows that:
    $$
    \begin{array}{rl}
    v^*(\mathcal{P}_{k,n}) &\le \mathcal{V}_k \\
    \max \left\{ \sum w_{ij} \mid k \le i < j < n, x_i \ne x_j \right\} 
    &\le \mathcal{V}_k \\
    \max \left\{ \sum w_{ij} \mid k \le i < j < n, x_i \ne x_j \right\} 
    &\le \sum_{k \le i < j < n} (w_{ij})^+ \\
    \end{array}
    $$
    
    Finally, we show that 
    \begin{equation}\label{eq:mcp:ski}
    \mathcal{C}_k \le \sum_{k \le i < n} |s^k_i|
    \end{equation}
    which is proved by contradiction. Indeed, falsifying that property would 
    require that $\mathcal{C}_k > \sum_{k \le i < n} |s^k_i|$. That is, it would 
    require that the value of the maximum cut between vertices in 
    $\mathcal{P}_{0,k}$ and those in $\mathcal{P}_{k,n}$ be strictly greater 
    than the sum of the marginal benefit for the "perfect" assignment of each 
    variable $\ge k$ wrt those $< k$. This either contradicts the definition of 
    a maximum (hence the definition of $\mathcal{C}_k$) or that of the state 
    transition function.
    
    Combining (\ref{eq:mcp:nk}), (\ref{eq:mcp:vk}) and (\ref{eq:mcp:ski}); we 
    have 
    $$
    \begin{array}{c}
     v^*_{\mathcal{P}_{0,k}}(s^k) = v^*(s^k) - v_r + \mathcal{N}_k \wedge
     v^*(\mathcal{P}_{k,n}) \le \mathcal{V}_k                     \wedge
     \mathcal{C}_k \le \sum_{k \le i < n} |s^k_i| \\
    \implies \\
    v^*_{\mathcal{P}_{0,k}}(s^k) + v^*(\mathcal{P}_{k,n}) + \mathcal{C}_k 
    \le 
    v^*(s^k) + \sum_{k \le i < n} |s^k_i| + \mathcal{V}_k + \mathcal{N}_k - v_r
    \end{array} 
    $$
    \qed
\end{proof}

\section{MAX2SAT}\label{appendix:rub:max2sat}
The DP model and relaxation we use for MAX2SAT again originates from 
\cite{bergman:16:branch-and-bound}. Their proof of correctness are to be found 
in the appendices of the same paper. Because the DP model and relaxation are 
extremely close for MAX2SAT and MCP, it was decided to omit their detailed 
formulation from these pages for the sake of conciseness.

\subsection{RUB}
In line with their DP models and relaxations, the RUB definition we used for 
MAX2SAT is very close to that of MCP. Formally, it is expressed as follows:
$$
\overline{s^k} = v^*(s^k) + \sum_{k \le i < n} |s^k_i| + \mathcal{V}_k + 
\mathcal{N}_k - v_r
$$
where $v_r = \sum_{0 \le i < n} w^{i,\neg{i}}$ is the sum of the weights of all 
tautological clauses in the problem\footnote{This is the only change we 
introduced to the original model, which one was not able to deal with 
tautological clauses.}. $\mathcal{N}_k$ is used to denote the sum of weights of 
tautological clauses in the subproblem bearing on assigned literals only. 
Formally, we have $\mathcal{N}_k = \sum_{0 \le i < k} w^{i,\neg{i}}$. Likewise, 
the definition of $\mathcal{V}_k$ is adapted to reflect the MAX2SAT nature of 
the problem. It is now defined as the sum of the maximal weights for the 
pairwise assignment of any two undecided variables. This is formally expressed 
as follows:
$$
\begin{array}{rcl}
\mathcal{V}_k &= 
& \sum_{k \le i < j < n} \max\left\{ w_{ij}^{TT}, w_{ij}^{TF}, w_{ij}^{FT}, 
w_{ij}^{FF} \right\} \\
&+ 
& \sum_{k \le i < n} \max\left\{ 
\left(w^{i, \neg i} + w^{i, i}\right), 
\left(w^{i, \neg i} + w^{\neg i, \neg i}\right) 
\right\} \\
\end{array}
$$ 
where $w_{ij}^{TT} = w^{i,j} + w^{i,\neg j} + w^{\neg i,j}$ is the total benefit 
reaped from assigning both variables $i$ and $j$ to true. Likewise, 
$w_{ij}^{TF}, w_{ij}^{FT}$ and $w_{ij}^{FF}$ are defined as follows:
$w_{ij}^{TF} = w^{i,j} + w^{i,\neg j} + w^{\neg i,\neg j}$ ; 
$w_{ij}^{FT} = w^{\neg i,j} + w^{\neg i,\neg j} + w^{i,j}$ ; 
$w_{ij}^{FF} = w^{\neg i,j} + w^{\neg i,\neg j} + w^{i,\neg j}$.

The proof of correctness for the MAX2SAT RUB is analogous to that of MCP.

\section{TSPTW}\label{appendix:rub:tsptw}
\paragraph{Important Note}
Because TSPTW is a \emph{minimization} problem, it does not respect the usual 
DDO maximization assumption. Still, DDO remains perfectly usable 
provided that relaxed MDD derives a \emph{lower bound} on the obective value and 
that RUB (which should rather be called \emph{rough lower bound} in this case) 
yields a lower bound on the optimal value rooted in any given node.

\subsection{DP Model}
The DP model we use to solve TSPTW with DDO is a variation of the classical DP 
formulation that minimizes the makespan. In essence, the model remains exactly 
the same, only did we adapt the definition of a state to make it more amenable 
to merging.

Formally, a state $s_k$ from the $k$th layer of the MDD is structured as a tuple 
$\left( position, earliest, latest, must\_visit, may\_visit \right)$ where 
$position$ is the set of cities where the traveling salesman might possibly be 
after he visited the $k$ first cities of his tour. As we will show later, this 
set is -- by definition of the transition function -- always a singleton except 
when $s_k$ is the state of a merged node. $Earliest$ denotes the earliest time 
when the salesman could have arrived to some city $ \in position$. Conversely, 
$latest$ denotes the latest moment at which the salesman might have arrived to 
some city $ \in position$. Again, in the absence of relaxed nodes, the time 
window $\left[earliest, latest\right]$would have a size of $1$ and thus denote 
an exact duration. Finally, $must\_visit$ is the set of cities that have 
not been visited on any of the $r-s_k$ path, and $may\_visit$ is the set of 
cities that have been visited along \emph{some} of the $r-s_k$ paths but 
\emph{not all of them}.

On that basis, the rest of the DP model is defined as follows:
\begin{itemize}
\item Root value: $v_r = 0$
\item Root state: $\left(\{0\}, 0, 0, \left\{1 \dots n-1\right\}, \emptyset 
\right)$
\item State transition function: 
$$
t\left(s_k, x_k\right) = \left(
t_{pos}\left(s_k, x_k\right), 
t_{earliest}\left(s_k, x_k\right), 
t_{latest}\left(s_k, x_k\right), 
t_{must}\left(s_k, x_k\right), 
t_{may}\left(s_k, x_k\right)
\right)
$$
where
$$
\begin{array}{ll}
t_{pos}\left(s_k, x_k\right)  &= \left\{x_k\right\} \\
t_{must}\left(s_k, x_k\right) &= s_k.must\_visit \setminus \left\{x_k\right\} \\
t_{may}\left(s_k, x_k\right)  &= s_k.may\_visit  \setminus \left\{x_k\right\} \\
t_{earliest}\left(s_k, x_k\right) &= 
\max\left(earliest(x_k) , \min\left\{ s_k.earliest + dist(p, x_k) \mid p \in 
s_k.position\right\}\right)\\
t_{latest}\left(s_k, x_k\right)&= 
\min\left(latest(x_k) , \max\left\{ s_k.latest + dist(p, x_k) \mid p \in 
s_k.position\right\}\right)\\
\end{array}
$$
\item Transition cost function: $h(s_k, x_k) = travel(s_k, x_k) + wait(s_k, 
x_k)$ where 
$$
\begin{array}{ll}
travel(x_k, s_k) &= \min\left\{dist(p, x_k) \mid p \in s_k.position \right\}\\
wait(x_k, s_k)   &= \left( earliest(x_k) - s_k.earliest + travel(s_k, x_k) 
\right)^+\\
\end{array}
$$
\end{itemize}

The state transition and transition cost functions $t(s_k, x_k)$ and $h(s_k, 
x_k)$ are only defined when the following three conditions hold:
\begin{itemize}
\item $x_k \in s_k.must\_visit \cup s_k.may\_visit$
\item $\min\left\{ s_k.earliest + dist(p, x_k) \mid p \in s_k.position\right\} 
\le latest(x_k)$
\item $\max\left\{ s_k.latest + dist(p, x_k) \mid p \in s_k.position\right\} \ge 
earliest(x_k)$
\end{itemize}

It is obvious that in the absence of relaxation, this model behaves as the 
classical DP model for TSPTW. Indeed, in that case $position$ is a always a 
singleton and therefore $earliest $ is always equal to $latest$. Also, one 
easily sees that $must\_visit$ complements the set of visited cities and 
$may\_visit$ is always empty. 

\subsection{Relaxation}
Our relaxation leverages the structure of the states to perform a merge operation
that loses as little information as possible. Our $\Gamma$ operator is defined 
as the identity function: the cost of edges entering the merged node don't need 
to be altered. Our $\oplus$ operator, on the other hand, is slightly more 
complex. It is defined as follows:
$$
\oplus(\mathcal{M}) =  \left(
\oplus_{pos}\left(\mathcal{M}\right), 
\oplus_{earliest}\left(\mathcal{M}\right), 
\oplus_{latest}\left(\mathcal{M}\right), 
\oplus_{must}\left(\mathcal{M}\right), 
\oplus_{may}\left(\mathcal{M}\right)
\right)
$$
where 
$$
\begin{array}{ll}
\oplus_{pos}\left(\mathcal{M}\right) &= 
    \bigcup_{u\in \mathcal{M}} u.position \\
\oplus_{earliest}\left(\mathcal{M}\right) &= 
    \min\left\{u.earliest \mid u\in \mathcal{M}\right\} \\
\oplus_{latest}\left(\mathcal{M}\right) &=
    \max\left\{u.latest \mid u\in \mathcal{M}\right\} \\
\oplus_{must}\left(\mathcal{M}\right) &= 
    \bigcap_{u\in \mathcal{M}} u.must\_visit \\
\oplus_{may}\left(\mathcal{M}\right) &=
    \left(\bigcup_{u\in \mathcal{M}} u.must\_visit \cup u.may\_visit \right) 
    \setminus
    \left(\bigcap_{u\in \mathcal{M}} u.must\_visit\right) \\
\end{array}
$$

\subsection{RLB (rough lower bound)}
First of all, the RLB we compute for TSPTW tries in a quick and inexpensive way 
to identify infeasible states: those which are guaranteed to enforce a 
constraint violation sooner or later. In those cases, the RLB returns the 
$\infty$ value. This approach closely relates to the fail first heuristic 
implemented in most constraint solvers.

We identified three infeasibility scenarios which can be efficiently checked:
\subsubsection{Too many unreachable cities}
When there are too many cities from the set $may\_visit$ which cannot be reached 
without violating the time window constraint (it becomes impossible to find a 
tour visiting $n$ cities without violating the time windows).
$$
\begin{array}{c}
\#\left\{ p \mid p \in s^k.may\_visit, s^k.earliest + shortest\_edge(p)
> latest(p) \right\} \\
> \\
n - k - \#s^k.must\_visit \\
\end{array}
$$

\subsubsection{One of the mandatory cities cannot be visited}
When one of the cities from the $must\_visit$ set cannot be possibly reached 
without violating its time window constraint.
$$
\exists_{p \in s^k.must\_visit} s^k.earliest + shortest\_edge(p) > latest(p)
$$

\subsubsection{When it is imposible to return to depot in time}
This occurs when the elapsed time since the start plus the sum of the shortest 
edges to each city $p$ makes it impossible to return to depot in due time.
Note, $shortest\_edge(p)$ is fixed throughout the whole optimization process as 
it denotes the shortest path to city $p$ from any other city. The sum $\sum_{p 
\in s^k.must\_visit} shortest\_edge(p)$ is thus an under approximation of the 
weight of an MST covering the remaining nodes to visit. The tightness loss 
which occurs vs using the actual weight of an MST is counter balanced by the 
tiny amount of time required to compute that estimate. This is a worthwhile 
trade off because of this RUB is computed at least once for every node in the 
MDD.
$$
s^k.earliest + \sum_{p \in s^k.must\_visit} shortest\_edge(p) > latest(depot)
$$

\subsubsection{Otherwise, when there are still cities to visit}
When $s^k.must\_visit \ne \emptyset$
$$
\begin{array}{rl}
s^k.earliest &+ \sum_{p \in s^k.must\_visit} shortest\_edge(p) \\
             &+ min\left\{ dist(p, depot) \mid p \in s^k.must\_visit \cup 
             s^k.may\_visit \right\} \\           
> & latest(depot)\\
\end{array}
$$

\subsubsection{Otherwise} When there are no cities left to visit
$$
s^k.earliest + dist(s^k.position, depot)
$$

\end{document}